\pgfplotsset{compat=newest}
\ifcvprfinal\pagestyle{empty}\fi
\newcommand{\lE}{\mathbf{E}}
\newcommand{\cC}{\mathcal{C}}
\newcommand{\cD}{\mathcal{D}}
\newcommand{\cK}{\mathcal{K}}
\newcommand{\cH}{\mathcal{H}}
\newcommand{\cO}{\mathcal{O}}
\newcommand{\cX}{\mathcal{X}}
\newcommand{\cL}{\mathcal{L}}
\newcommand{\cY}{\mathcal{Y}}
\newcommand{\cM}{\mathcal{M}}
\newcommand{\cG}{\mathcal{G}}
\newcommand{\Wh}{\mathbb{W}}
\newcommand{\dd}{\mathrm{d}}
\newcommand{\bM}{\mathbb{M}}
\newcommand{\Z}{\mathbf{Z}}
\newcommand{\R}{\mathbf{R}}
\newcommand{\Rext}{\overline{\mathbf{R}}}
\newcommand{\M}{\mathbf{M}}
\newcommand{\LM}{\mathbf{\Lambda}}
\newcommand{\spt}{\operatorname{spt}}
\newcommand{\cl}{\operatorname{cl}}
\newcommand{\norm}[1]{\| #1 \|}
\newcommand{\normc}[1]{| #1 |}
\newcommand{\iprod}[2]{\langle #1, #2 \rangle}
\newcommand{\bi}{\mathbf{i}}
\newcommand{\bj}{\mathbf{j}}
\newcommand{\measurerestr}{%
  \,\raisebox{-.127ex}{\reflectbox{\rotatebox[origin=br]{-90}{$\lnot$}}}\,%
}
\newtheorem{defi}{Definition}
\newtheorem{prop}{Proposition}
\newcommand{\id}{\operatorname{id}}
\definecolor{myorange}{rgb}{1,0.647058,0}
\begin{document}

%%%%%%%%% TITLE
\title{Lifting Vectorial Variational Problems: A Natural Formulation based on\\ Geometric Measure Theory and Discrete Exterior Calculus}

\author{Thomas M\"ollenhoff and Daniel Cremers\\
Technical University of Munich \\
{\tt\small \{thomas.moellenhoff,cremers\}@tum.de}
% For a paper whose authors are all at the same institution,
% omit the following lines up until the closing ``}''.
% Additional authors and addresses can be added with ``\and'',
% just like the second author.
% To save space, use either the email address or home page, not both
}

\maketitle
%\thispagestyle{empty}

%%%%%%%%% ABSTRACT
\begin{abstract}
  Numerous tasks in imaging and vision can be formulated as
  variational problems over vector-valued maps.  We approach the relaxation
  and convexification of such vectorial variational problems via a
  lifting to the space of currents. 
  To that end, we recall that functionals with polyconvex Lagrangians
  can be reparametrized as convex one-homogeneous functionals on the graph
  of the function. This leads to an equivalent shape optimization problem
  over oriented surfaces in the product space of domain and codomain.
  A convex formulation is then obtained by relaxing the search space from oriented surfaces to more general currents.
  We propose a discretization of the resulting infinite-dimensional
  optimization problem using Whitney forms, which also generalizes 
  recent ``sublabel-accurate'' multilabeling approaches.
\end{abstract}

%%%%%%%%% BODY TEXT
\section{Introduction}
We consider functionals of $C^1$-mappings $f : \cX \to \cY$
\begin{equation}
  E(f) = \int_{\cX} c\left(x, f(x), \nabla f(x)\right) \mathrm{d} x,
  \label{eq:pc}
\end{equation}
where $\cX \subset \R^n$, $\cY \subset \R^N$ are bounded and open. The
cost function $c \equiv c(x, y, \xi)$ is assumed to be a nonnegative (possibly nonconvex) continuous
function on $\cX \times \cY \times \R^{N \times n}$ that is \emph{polyconvex} (see Def.~\ref{def:poly})
in the Jacobian matrix $\xi$.

This work is concerned with relaxation and global optimization of
\eqref{eq:pc} when, both, dimension and codimension are possibly larger than
one ($n> 1$, $N > 1$). This is expected to be difficult:
In the discrete setting problems with $n=1$ or $N=1$ typically correspond to polynomial-time solvable shortest path ($n=1$) or graph cut ($N=1$) problems~\cite{cohen1997global,tsitsiklis,Ishikawa,schoenemann2010combinatorial}, whereas for $n,N>1$, the arising
multilabel problems with unordered label spaces are known to be NP-hard - see \cite{li2016complexity}.
Nevertheless, heuristic strategies have been shown
to yield excellent results in tasks such as optical flow~\cite{fullflow} or shape matching~\cite{shekhovtsov2008efficient,chen2015robust}.
In contrast to such well-established Markov random field (MRF)
works~\cite{kolmogorov2006convergent,kolmogorov2007minimizing,kohli2008partial,shekhovtsov2008efficient,menze2015discrete,chen2015robust,fullflow,domokos2018mrf} we consider the way less explored
continuous (infinite-dimensional) setting.

Our motivation partly stems from the fact that formulations in function space are very
general and admit a variety of discretizations. Finite difference discretizations of
continuous relaxations often lead to models that are reminiscent of MRFs~\cite{Zach-et-al-cvpr12}, while piecewise-linear
approximations are related to discrete-continuous
MRFs~\cite{Zach-Kohli-eccv14},
see~\cite{fix2014duality,moellenhoff-iccv-2017}. More recently, for the Kantorovich
relaxation in optimal transport, approximations with deep neural networks
were considered and achieved promising performance, for example in generative
modeling~\cite{ACB17,seguy2017large}.

We further argue that fractional (non-integer) solutions to a careful discretization of the continuous model can implicitly approximate an ``integer'' continuous solution. Therefore one can achieve accuracies
that go substantially beyond the mesh size. The resulting models would be difficult to interpret and derive from a finite-dimensional viewpoint such that the continuous considerations are
required for the final implementation. Also, formulations arising from continuous relaxations allow one to introduce isotropic smoothness potentials without reverting to higher-order terms in
the cost, and, as we show in this work, one can impose general polyconvex regularizations using only local constraints.
An example of a polyconvex function (which is in general nonconvex) is the surface area of the graph,
sometimes referred to as ``Beltrami regularization'' in the image processing community, see e.g., \cite{belt}.

In contrast to the discrete multi-labeling setting, an important question is whether variational problems involving the energy \eqref{eq:pc} admit a minimizer.
A fruitful approach to address this question is to suitably relax the notion of solution, thereby enlarging the search space of admissible candidates (``lifting the problem to a larger space'').
The origins of this idea can be traced back\footnote{We refer the interested reader to the
historical remarks in L.~C.~Young's book on the calculus of
variations~\cite[pp.~122--123]{young2000lecture}.} to the turn of
the century, see Hilbert's twentieth problem~\cite{hilbert}.
An example of that principle is the celebrated Kantorovich relaxation~\cite{kantorovich1960} of Monge's transportation problem~\cite{monge1781}. There, the search over maps $f
: \cX \to \cY$ is relaxed to one over probability measures on the product space $\cX \times \cY$.
Each map can be identified in that extended space with a measure concentrated on its graph.
Existence of optimal transportation plans follows directly due to good compactness properties of the larger space.
Furthermore, the nonlinearly constrained and nonconvex optimization problem is transformed into one of linear programming, leading to rich duality theories and fast numerical algorithms~\cite{PeCu18}.

One may ask whether the relaxed solution in the extended
space has certain regularity properties, for example whether it is the graph of a
(sufficiently regular) map and thus can be considered a solution
to the original (``unlifted'') problem. In the case of optimal
transport, such regularity theory can be guaranteed under some assumptions~\cite{Vil08,San15}. Establishing existence and regularity for
problems in which the cost additionally depends on the Jacobian (for example minimal surface problems) has been a driving factor in
the development of geometric measure theory, see~\cite{morgan2016geometric} for an introduction.
In this work, we will use ideas from geometric measure theory to pursue the above relaxation and lifting principle for the energy~\eqref{eq:pc}.
The main idea is to reformulate the original variational problem as a shape optimization
problem over oriented manifolds representing the graph of the map $f : \cX \to \cY$ in the product space $\cX \times \cY$ .
To obtain a convex formulation we enlarge the search space from oriented manifolds to currents.

\subsection{Related Work}
A common strategy to solve problems involving \eqref{eq:pc} is to revert to local gradient descent minimization based on the Euler-Lagrange equations.
But for nonconvex problems solutions might depend on the initialization and the computed stationary points may be quite suboptimal.
Therefore, we pursue the aforementioned lifting of the energy \eqref{eq:pc} to currents. This lifting has been
previously considered in geometric measure theory to establish the aforementioned existence and regularity theory for vectorial variational problems
in a very broad setting, see e.g.,~\cite{federer,federer1974real,aviles1991variational}.
In contrast to such impressive theoretical achievements, this paper is concerned with a discretization and implementation.

There is also a variety of related applied works.
The paper~\cite{windheuser2011geometrically} tackles the problem of bijective and smooth shape matching using linear programming. 
Similar to the present work, the authors also look for graph surfaces in $\cX \times \cY$ but they consider the discrete setting and use a different notion of boundary operator.
We study the continuous setting, but also our discrete formulation is quite different. 

For $N=1$, the proposed continuous formulation specializes to~\cite{ABDM,PCBC-SIIMS}. To tackle the setting of $N > 1$ in a memory efficient manner,
Strekalovskiy~\etal~\cite{Strekalovskiy-et-al-cvpr12,goldluecke2013tight,strekalovskiy-et-al-siims14} keep a collection of $N$ surfaces with codimension one under the factorization assumption that
$\cY = \cY_1 \times \hdots \times \cY_N$. In contrast, we consider only one surface of codimension $N$, we do not require an assumption on $\cY$, our approach is applicable to a larger class of
functionals and we expect it to yield a tighter relaxation. The lifting approaches \cite{lellmann-et-al-iccv2013,goldstein2012global} also tackle vectorial problems by considering the full product
space, but are limited to total variation regularization (with the former allowing $\cY$ to be a manifold). The recent work \cite{windheuser2016convex} is most related to the present one, however
their relaxation considers a specific instance of~\eqref{eq:pc}. Moreover, the above works are based on finite difference discretizations of the continuous model. In contrast, the proposed
discretization using discrete exterior calculus yields solutions beyond the mesh accuracy as in recent sublabel-accurate approaches. The latter are restricted to $N=1$~\cite{moellenhoff-laude-cvpr-2016,moellenhoff-iccv-2017} or total variation regularization~\cite{laude16eccv}. Recent works also include extensions to total generalized variation or Laplacian regularization \cite{strecke2018sublabel,vogt,loewe}.

Recent approaches in shape analysis~\cite{solomon2016entropic,vestner2017product,vestner2017efficient}
also operate in the product space $\cX \times \cY$. However, these are based on
local minimizations of the Gromov-Wasserstein distance~\cite{memoli2007use}
and spectral variants thereof~\cite{memoli2009spectral} which leads
to (nonconvex) quadratic assignment problems. While the goal to find a 
smooth (possibly bijective) map is similar, the formulations appear to be quite different.
To alleviate the increased cost of the product space formulation,
computationally efficient representations of densities in $\cX \times \cY$ have been
studied in the context of functional maps \cite{ovsjanikov2012functional, rodola2018functional}.

\section{Notation and Preliminaries}
\label{sec:fundamentals}
Throughout this paper we will introduce notions from geometric measure theory, as they are
not commonly used in the vision community. While the subject is rather technical,
our aim is to keep the presentation light and to focus on the geometric intuition and
aspects which are important for a practical implementation. We invite the reader
to consult chapter 4 in the book~\cite{morgan2016geometric} and the chapter on
exterior calculus in~\cite{crane2015discrete}, which both contain many illuminating illustrations.
For a more technical treatment we refer to~\cite{federer,KP08}.

In the following, we denote a basis in $\R^d$ as $\{ e_1, \hdots, e_d \}$ with dual basis $\{ \dd x_1, \hdots, \dd x_d \}$ where $\dd x_i : \R^d \to \R$ is
the linear functional that maps every $x = (x_1, \hdots, x_d)$ to the $i$-th component $x_i$.
Given an integer $k \leq d$, $I(d, k)$ are the multi-indices $\bi = (i_1, \hdots, i_k)$ with $1 \leq i_1 < \hdots < i_k \leq d$.

As we will consider $n$-surfaces in $\cX \times \cY \subset \R^{n+N}$, most of the time we set $d = n + N$ and $k = n$. To further simplify notation, we denote the basis vectors $\{ e_{n+1}, \hdots, e_{n+N} \}$ by $\{ \varepsilon_1, \hdots, \varepsilon_{N} \}$ and similarly refer to the dual basis as $\{ \dd x_1, \hdots \dd x_n, \dd y_1, \hdots, \dd y_N \}$.
When it is clear from the context, we treat vectors $e_i \in \R^{n}$ and $\varepsilon_i \in \R^N$
in the sense that $e_i \simeq (e_i, \mathbf{0}_N) \in \R^{n+N}$, $\varepsilon_i \simeq (\mathbf{0}_n, \varepsilon_i) \in \R^{n+N}$. As an example, for $\nabla f(x) \in \R^{N \times n}$ we can define the expression $e_i + \nabla f(x) e_i$ and read it as $\left( e_i, \nabla f(x) e_i \right) \in \R^{n + N}$.

\subsection{Convex Analysis}
The extended reals are denoted by $\Rext = \R \cup \{ +\infty \}$. For a finite-dimensional real vector space $V$ and $\Psi : V \to \Rext$ we denote the
convex conjugate as $\Psi^* : V^* \to \Rext$ and the biconjugate as $\Psi^{**} : V \to \Rext$. $\Psi^{**}$ is the largest lower-semicontinuous convex function below $\Psi$.
In our notation, for functions with several arguments, the conjugate is always taken only in the last argument. As a general reference to convex analysis, we refer the reader to the books \cite{hiriart2012fundamentals,Rockafellar:ConvexAnalysis}.

\subsection{Multilinear Algebra}
The formalism of multi-vectors we introduce in this section is central to this work, as the idea of the relaxation is to represent the oriented graph of $f$ by a $k$-vectorfield (more precisely: a
$k$-current) in the product space $\cX \times \cY$.
Basically, one can multiply $v_i \in \R^d$ to obtain an object
\begin{equation}
  v = v_1 \wedge \hdots \wedge v_k,
  \label{eq:sv}
\end{equation}
called a \emph{simple} $k$-vector in $\R^d$. The geometric intuition of simple $k$-vectors is, that they describe the $k$-dimensional space spanned by the $\{ v_i \}$, together with an orientation and the
area of the parallelotope given by the $\{ v_i \}$. Thus, simple $k$-vectors can be thought of oriented parallelotopes as shown in orange in Fig.~\ref{fig:graph_illu}.
In general, $k$-vectors are defined to be formal sums
\begin{equation}
  v = \sum_{\bi \in I(d, k)} v^{\bi} \cdot e_{\bi_1} \wedge \hdots \wedge e_{\bi_k} = \sum_{\bi \in I(d,k)} v^{\bi} \cdot e_{\bi},
\end{equation}
for coefficients $v^{\bi} \in \R$. They form the vector space $\LM_k \R^d$, which has dimension $\binom{d}{k}$.

The dual space $\LM^k \R^d$ of $k$-covectors is defined analogously, with $\iprod{\dd x_\bi}{e_\bj} = \delta_{\bi\bj}$.
We define for two $k$-vectors (and also for $k$-covectors) $v = \sum_\bi v_\bi e_\bi$, $w = \sum_\bi w_\bi e_\bi$ an inner product $\iprod{v}{w} = \sum_\bi v_{\bi} w_{\bi}$ and norm $\normc{v} = \sqrt{\iprod{v}{v}}$.

$k$-vectors (elements of $\LM_k \R^d$) are called \emph{simple}, if they can be written as the \emph{wedge product}
of $1$-vectors as in \eqref{eq:sv}.
Unfortunately, for $1 < k < d - 1$, not all $k$-vectors are simple and the set of simple $k$-vectors is a nonconvex cone in $\LM_k \R^d$, called the Grassmann cone~\cite{busemann1963convex}.
This is one aspect why the setting of $n > 1$ and $N > 1$ is more challenging. 

Later on, we will consider a relaxation from the nonconvex set of simple $k$-vectors 
to general $k$-vectors. Naturally, for the relaxation to be good, we want the convex energy to be 
\emph{as large as possible} on non-simple $k$-vectors.
For the Euclidean norm, a good convex extension is the \emph{mass} norm
\begin{equation}
  \norm{v} = \inf \left \{ \sum_i |\xi_i| : \xi_i \text{ are simple}, v = \sum_i \xi_i \right \}.
  \label{eq:mass}
\end{equation}
The dual norm is the \emph{comass} norm given by:
\begin{equation}
  \norm{w}^* = \sup \left \{ \iprod{w}{v} : v \text{ is simple }, |v| \leq 1 \right \}.
  \label{eq:comass}
\end{equation}
The mass norm can be understood as the largest norm that agrees with the Euclidean norm on simple $k$-vectors.

\graphicspath{{figures/illustrations/}}
\begin{figure}[t!]
  \centering
  \def\svgwidth{\linewidth}
  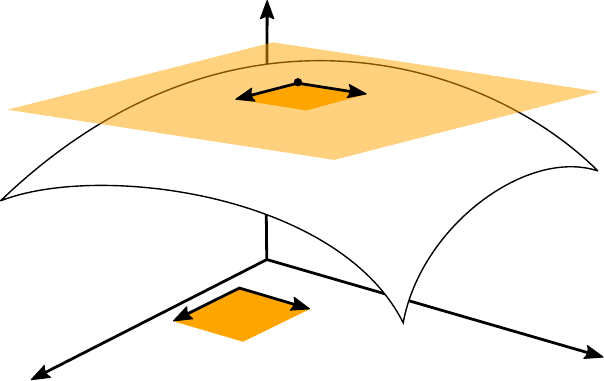
  \vspace{0.01mm}
  \caption{Illustration for the setting of $n = 2$, $N = 1$. The graph $\cG_f$ of the $C^1$-map $f : \cX \to \cY$ is a smooth oriented manifold embedded in the product space $\cX \times \cY$. The
    tangent space at $z = (x, f(x))$ is spanned by the simple $n$-vector $(e_1 + \xi e_1) \wedge \hdots \wedge (e_n + \xi e_n) \in \LM_n \R^{n+N}$,
    where $\xi = \nabla f(x) \in \R^{N \times n}$ is the Jacobian.}
  \label{fig:graph_illu}
\end{figure}

\section{Lifting to Graphs in the Product Space}
\label{sec:product-space}
With the necessary preliminaries in mind, our goal is now to reparametrize the original energy \eqref{eq:pc}
to the graph $\cG_f \subset \cX \times \cY$.
As shown in Fig.~\ref{fig:graph_illu}, the graph is an oriented $n$-dimensional
manifold in the product space with global parametrization $u(x) = (x, f(x))$.
\begin{defi}[Orientation]
If $\cM \subset \R^d$ is a $k$-dimensional smooth manifold in $\R^d$ (possibly with boundary), an \textbf{orientation} of $\cM$ is a
continuous map $\tau_\cM : \cM \to \LM_k \R^d$ such that $\tau_\cM(z)$ is a simple $k$-vector with unit norm
that spans the tangent space $T_z \cM$ at every point $z \in \cM$.
\end{defi}
From differential geometry we know that the tangent space $T_z \cG_f$ at $z = (x, f(x))$ is spanned by
$\partial_{i} u (u^{-1}(z)) = e_i + \nabla f(x) e_i$.
Therefore, an orientation of $\cG_f$ is given by
\begin{equation}
  \tau_{\cG_f}(z) = \frac{M(\nabla f(\pi_1 z))}{\normc{M(\nabla f(\pi_1 z))}},
  \label{eq:tau}
\end{equation}
where the map $M : \R^{N \times n} \to \LM_n \R^{n+N}$ is given by
\begin{equation}
  M(\xi) = (e_1 + \xi e_1) \wedge \hdots \wedge (e_n + \xi e_n),
  \label{eq:map_M}
\end{equation}
and $\pi_1 : \cX \times \cY \to \cX$ is the canonical projection onto the first argument.
In order to derive the reparametrization, we have to connect a simple $n$-vector (representing an oriented tangent plane of the graph)
with the Jacobian of the original energy. For that, we need an inverse of the map given in \eqref{eq:map_M}.

To derive such an inverse, we first introduce further helpful notations. For $\bi \in I(m, l)$ we denote by $\bar{\bi} \in I(m, m - l)$ the
element which complements $\bi$ in $\{1, 2, \hdots, m \}$ in increasing order, denote $\bar{0} = \{1, \hdots, m \}$ and $0$ as the 
empty multi-index.
Every $v \in \LM_n \R^{n+N}$ can be written as
\begin{equation}
  v = \sum_{|\bi| + |\bj| = n} v^{\bi, \bj} e_\bi \wedge \varepsilon_\bj,
  \label{eq:decomp1}
\end{equation}
where $\bi \in I(n, l)$, $\bj \in I(N, l')$, $l + l' = n$. 
To give an example, the $\binom{5}{2}=10$ coefficients of a $2$-vector $v \in \LM_2 \R^5$ according to
the notation \eqref{eq:decomp1} are:
\newcommand{\ckg}{\cellcolor{gray!10}}
\begin{equation}
  \begin{matrix}
    v^{\bar 0,0} & & & \\
    \ckg v^{1,1} & \ckg v^{2,1} & & \\
    \ckg v^{1,2} & \ckg v^{2,2} & v^{0,(1,2)} & \\
    \ckg v^{1,3} & \ckg v^{2,3} & v^{0,(1,3)} & v^{0,(2,3)},
  \end{matrix}
\end{equation}
where we highlighted the \colorbox{gray!10}{$N \times n$ coefficients} with $|\bj| = 1$. 
Now note that the vector $v = M(\xi)$ is by construction a simple $n$-vector with first component $v^{\bar0,0} = 1$. To any
$v \in \LM_n \R^{n+N}$ with $v^{\bar 0,0} = 1$ we associate $\xi(v) \in \R^{N \times n}$ given by
\begin{equation}
  \left[ \xi(v) \right]_{j,i} = (-1)^{n - i} v^{\bar i,j}.
  \label{eq:inverse}
\end{equation}
If and only if $v \in \LM_n \R^{n+N}$ is simple with first component $v^{\bar 0,0} = 1$ then $v = M(\xi(v))$. A proof is given in~\cite[Vol.~I, Ch.~2.1, Prop.~1]{GMS-CC}. Thus, 
on the set of simple $n$-vectors with first component $v^{\bar 0,0} = 1$,
\begin{equation}
    \Sigma_1 = \{ v \in \LM_n \R^{n + N} : v = M(\xi) \text{ for } \xi \in \R^{N \times n} \},
\end{equation}
the inverse of the map \eqref{eq:map_M} is given by \eqref{eq:inverse}.

Using the above notations, we can define a generalized notion of convexity, which
essentially states that there is a convex reformulation on $k$-vectors.
\begin{defi}[Polyconvexity]
  A map $c : \R^{N \times n} \to \Rext$ is \textbf{polyconvex} if there is a convex function
  $\bar c : \LM_n \R^{n+N} \to \Rext$ such that we have
  \begin{equation}
    c(\xi) = \bar c(M(\xi)) ~~\text{ for all }~~ \xi \in \R^{N \times n}.
    \label{eq:pce}
  \end{equation} 
  Equivalently one has that $c(\xi(v)) = \bar c(v)$ for all $v \in \Sigma_1$. We also refer to the convex function $\bar c$ as a \textbf{polyconvex extension}.
  \label{def:poly}
\end{defi}
In general, the polyconvex extension is not unique. Any convex function has an obvious polyconvex extension by \eqref{eq:inverse}, but
as discussed in the previous section we would like the convex extension to be as large as possible for $v \notin \Sigma_1$. 
The largest polyconvex extension which agrees with the original function on $\Sigma_1$ can be 
formally defined using the convex biconjugate, but is often hard to explicitly compute. The mass norm \eqref{eq:mass} corresponds to such a construction.

Nevertheless, given any polyconvex extension, we can now reparametrize the original energy \eqref{eq:pc} on the oriented graph $\cG_f$, as we show in the following central proposition.
\begin{prop}
  Let $\bar c : \cX \times \cY \times \LM_n \R^{n + N} \to \Rext$ be a polyconvex extension of the original cost $c$ in the last argument.
  Define the function $\Psi : \cX \times \cY \times \LM_n \R^{n + N} \to \Rext$,
  \begin{equation}
    \Psi(z, v) =
    \begin{cases}
      v^{\bar 0, 0} \bar c(\pi_1 z,  \pi_2 z, v / v^{\bar 0, 0}), &\text{if } v^{\bar 0,0} > 0,\\
      +\infty, &\text{otherwise,}
    \end{cases}
    \label{eq:phi}
  \end{equation}
  where $\pi_1 : \cX \times \cY \to \cX$ and $\pi_2 : \cX \times \cY \to \cY$ are the canonical projections
  onto the first and second argument.
  Then we can reparametrize \eqref{eq:pc} as follows:
  \begin{equation}
    \begin{aligned}
      &\int_{\cX} c(x, f(x), \nabla f(x)) \, \dd \cL^n(x) \\
      &\qquad = \int_{\cG_f} \Psi(z, \tau_{\cG_f}( z )) \, \dd \cH^n(z),
    \end{aligned}
    \label{eq:graph}
  \end{equation}
  where the second integral is the standard Lebesgue integral with respect to the $n$-dimensional Hausdorff measure on $\R^{n+N}$ restricted
  to the graph $\cG_f$. 
  \label{prop1}
\end{prop}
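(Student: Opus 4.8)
The plan is to prove \eqref{eq:graph} by pulling the Hausdorff-measure integral over $\cG_f$ back to a Lebesgue integral over $\cX$ via the area formula, and then to exploit the positive one-homogeneity that the construction \eqref{eq:phi} builds into $\Psi$ in order to cancel the resulting area factor against the normalization in the orientation $\tau_{\cG_f}$. The polyconvexity identity \eqref{eq:pce} then recovers the original integrand $c$.

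Concretely, I would first record that $u(x) = (x, f(x))$ is an injective $C^1$ immersion of $\cX$ onto $\cG_f$ (injectivity is immediate, since $\pi_1 \circ u = \id_\cX$), with the columns of its differential given by $\partial_i u(x) = e_i + \nabla f(x) e_i$. By the very definition \eqref{eq:map_M} of $M$, the simple $n$-vector associated to the tangent plane is exactly $\partial_1 u(x) \wedge \dots \wedge \partial_n u(x) = M(\nabla f(x))$. The area formula for the $C^1$ parametrization $u$ then yields, for any Borel integrand $g$,
\[
  \int_{\cG_f} g(z)\,\dd\cH^n(z) = \int_\cX g(u(x))\, J(x)\,\dd\cL^n(x),
\]
where $J(x) = \sqrt{\det\!\big(Du(x)^\top Du(x)\big)}$ is the $n$-dimensional Jacobian.

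The key algebraic identity is $J(x) = \normc{M(\nabla f(x))}$: the Gram determinant $\det(Du^\top Du)$ equals the squared volume of the parallelotope spanned by the $\partial_i u$, which in turn equals $\normc{\partial_1 u \wedge \dots \wedge \partial_n u}^2 = \normc{M(\nabla f(x))}^2$. Taking $g(z) = \Psi(z, \tau_{\cG_f}(z))$, I would then evaluate the integrand pointwise. Since $M(\xi)^{\bar 0,0} = 1$, the orientation $\tau_{\cG_f}(u(x)) = M(\nabla f(x))/\normc{M(\nabla f(x))}$ has positive first component $1/\normc{M(\nabla f(x))}$, so the first branch of \eqref{eq:phi} applies, and $\tau_{\cG_f}(u(x)) / \tau_{\cG_f}(u(x))^{\bar 0,0} = M(\nabla f(x))$. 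Using \eqref{eq:pce}, this gives
\[
  \Psi\big(u(x), \tau_{\cG_f}(u(x))\big) = \frac{c(x, f(x), \nabla f(x))}{\normc{M(\nabla f(x))}}.
\]
Multiplying by $J(x) = \normc{M(\nabla f(x))}$ cancels the factor and leaves $c(x, f(x), \nabla f(x))$, which is precisely the integrand on the left-hand side of \eqref{eq:graph}.

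I expect the two nonroutine ingredients to be (i) the identification of the area factor with $\normc{M(\nabla f)}$, which rests on the multilinear-algebra fact that the Euclidean norm of a simple $n$-vector equals the volume of the parallelotope it spans (a Gram-determinant computation), and (ii) the applicability of the area formula to the $C^1$ embedding $u$. The remaining bookkeeping is exactly what the one-homogeneous extension \eqref{eq:phi} is designed to make transparent: one can equivalently note that $\Psi(z, \cdot)$ is positively one-homogeneous with $\Psi(z, M(\xi)) = \bar c(\pi_1 z, \pi_2 z, M(\xi)) = c(\pi_1 z, \pi_2 z, \xi)$, so that the normalization hidden in $\tau_{\cG_f}$ is automatically absorbed and need not be tracked by hand.
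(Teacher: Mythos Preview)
Your proposal is correct and is essentially the paper's proof run in the opposite direction: the paper starts from the $\cX$-integral, rewrites $c$ as $\Psi(\cdot, M(\nabla f))$ using $M(\nabla f)^{\bar 0,0}=1$, applies the area formula to pass to $\cG_f$, and then uses the positive one-homogeneity of $\Psi$ to absorb the factor $1/\normc{M(\nabla f)}$ into $\tau_{\cG_f}$. The ingredients you single out---the area formula for the embedding $u$, the identification $J(x)=\normc{M(\nabla f(x))}$, and the one-homogeneity of $\Psi$---are exactly the ones the paper uses.
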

\begin{proof}
  We directly calculate:
  \begin{align}
      &\int_{\cX} c\left( x, f(x), \nabla f(x) \right) \dd \cL^n(x) \label{eq:prf_start} \\
      &\, {=} \int_{\cX} \Psi \left( x, f(x), M(\nabla f(x)) \right) \dd \cL^n(x) \label{eq:prfa}\\
      &\, {=} \int_{\cG_f} \Psi \left( z, M(\nabla f(\pi_1 z)) \right) \frac{1}{|M(\nabla f(\pi_1 z))|} \mathrm{d}\cH^n(z) \label{eq:prfb} \\
      &\, {=} \int_{\cG_f} \Psi \left( z, \tau_{\cG_f}(z) \right) \mathrm{d}\cH^n(z). \label{eq:prfc}
    \end{align}
  The step from \eqref{eq:prf_start} to \eqref{eq:prfa} uses 
  that $\bar c$ is a polyconvex extension (so that we can apply \eqref{eq:pce}) and the fact that for $v = M(\nabla f(x))$ we have $v^{\bar 0,0} = 1$. To arrive at \eqref{eq:prfb}, an
  application of the area formula~\cite[Corollary~5.1.13]{KP08} suffices and
  for \eqref{eq:prfc} we used positive one-homogenity of $\Psi$ and the definition of $\tau_{\cG_f}$ in \eqref{eq:tau}.
\end{proof}
Interestingly, the function \eqref{eq:phi} is convex and one-homogeneous
in the last argument, as it is the \emph{perspective} of a convex function.
However, the search space of oriented graphs of $C^1$ mappings is nonconvex. Therefore we relax from
oriented graphs to the larger set of currents, which we will introduce in the following section.
Since currents form a vector space, we therefore obtain a convex functional over a convex domain.

\section{From Oriented Graphs to Currents}
\begin{figure*}[t!]
  \centering
  \begin{center}
  \begin{subfigure}{0.245\linewidth}
    \def\svgwidth{1\linewidth}
    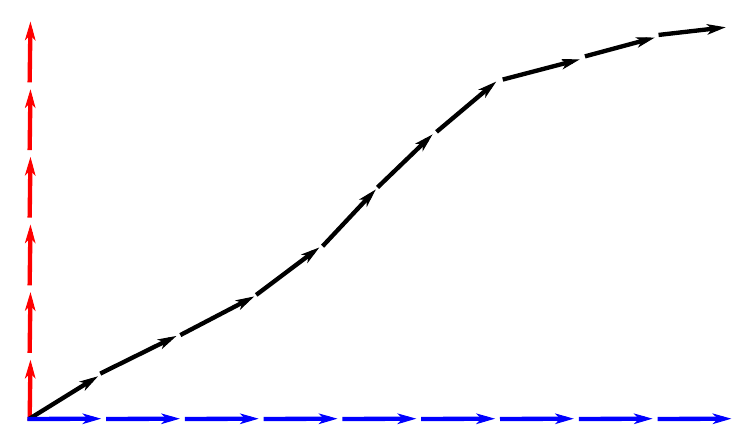
    \caption{Graph of diffeomorphism $f$}
    \label{fig:a}
  \end{subfigure}
  \begin{subfigure}{0.245\linewidth}
    \def\svgwidth{1\linewidth}
    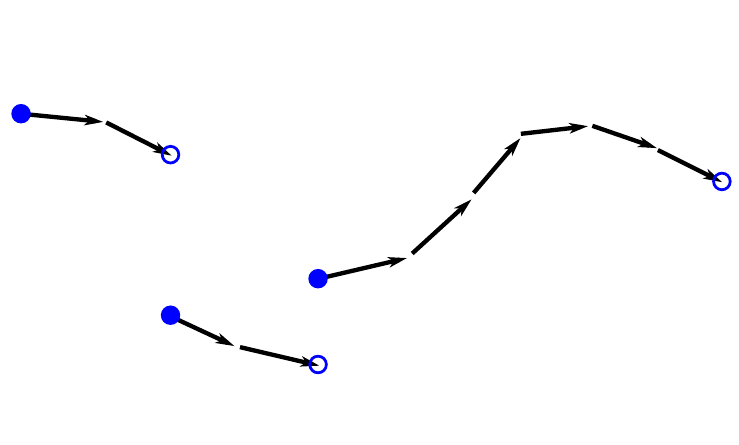
    \caption{Graph of function with jumps}
    \label{fig:b}
  \end{subfigure}
  \begin{subfigure}{0.245\linewidth}
    \def\svgwidth{1\linewidth}
    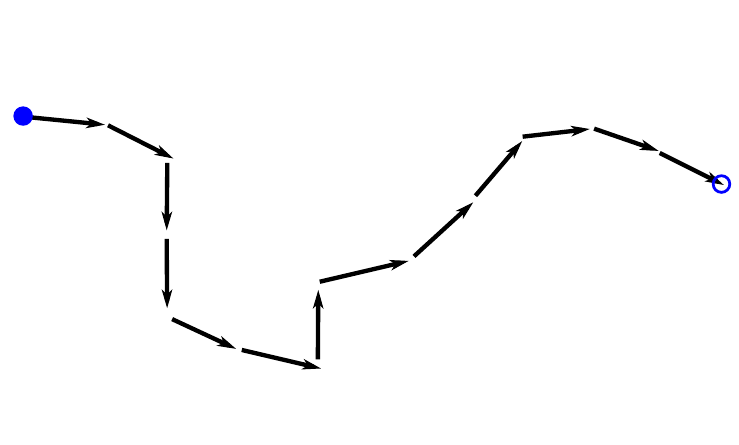
    \caption{``Stitched'' graph}
    \label{fig:c}
  \end{subfigure}
  \begin{subfigure}{0.245\linewidth}
    \def\svgwidth{1\linewidth}
    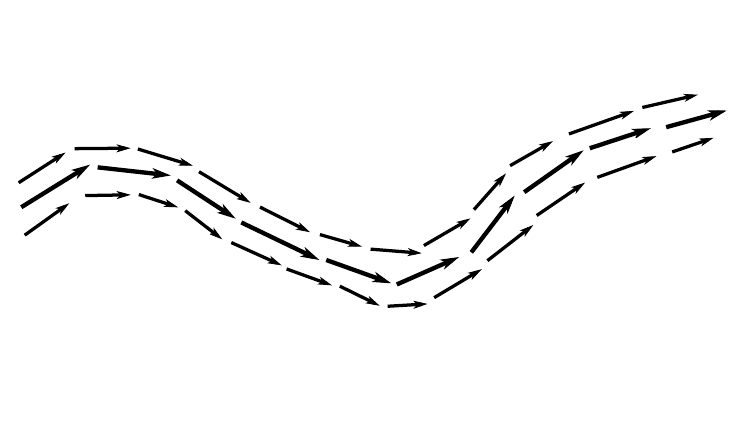
    \caption{Current which is not a graph}
    \label{fig:d}
  \end{subfigure}
  \end{center}
  \caption{The idea of our relaxation is to move from oriented graphs in the product space to the larger set of currents. These include oriented graphs as special cases, as shown in
     \subref{fig:a}. For a diffeomorphism, the pushforwards $\pi_{1\sharp} T$, $\pi_{2\sharp} T$ yield currents induced by domain and codomain, which will be a linear constraint in the relaxed problem.
     In \subref{fig:b} we show the current given by the graph of a discontinuous function. Since it has holes, the boundary operator $\partial T$ has support inside the domain. We will constrain
     the support of the boundary to exclude such cases. \subref{fig:c} Stitching jumps yields a current with vertical parts at the jump points,
     which corresponds to the limiting case in the perspective function \eqref{eq:phi}. To obtain an overall convex formulation, we will also allow currents \subref{fig:d} which don't
     necessarily concentrate on the graph of a function.}
\label{fig:currents}
\end{figure*}
\label{sec:lifting}

Throughout this section, let $U \subset \R^d$ be an open set, which will later be a neighbourhood of $X \times Y \subset \R^{n+N}$, where $X = \cl(\cX)$, $Y = \cl(\cY)$ are the closures of $\cX, \cY$.
The main idea of our relaxation and the geometric intuitions of \emph{pushforward} and \emph{boundary} operator
we introduce in this section are summarized in the following Fig.~\ref{fig:currents}. Currents are defined
in duality with differential forms, which we will briefly introduce in the following section.

\subsection{Differential Forms}
\label{sec:cur}
A differential form of order $k$ (short: $k$-form) is a map $\omega : U \to \LM^k \R^d$.
The \emph{support} of a differential form $\spt \omega$ is
defined as the closure of $\{ z \in U : \omega(z) \neq 0 \}$.
Integration of a $k$-form over an oriented $k$-dimensional manifold is defined by
\begin{equation}
  \int_\cM \omega := \int_\cM \iprod{\omega(z)}{\tau_\cM(z)} \, \mathrm{d} \cH^k(z).
  \label{eq:integral}
\end{equation}
A notion of derivative for $k$-forms is the exterior derivative $d \omega$, which is a $(k+1)$-form
given by:
\begin{equation}
  \begin{aligned}
    &\iprod{d \omega(z)}{v_1 \wedge \hdots \wedge v_{k+1}} =
    \lim_{h \to 0} \frac{1}{h^{k+1}} \int_{\partial P} \omega, 
  \end{aligned}
  \label{eq:exterior_diff}
\end{equation}
where $\partial P$ is the oriented boundary of the parallelotope spanned by the $\{ h v_i \}$ at point $z$.

To get an intuition, note that for $k = 0$ this reduces to the familiar directional derivative $\iprod{d\omega(x)}{v_1} = \lim_{h \to 0} \frac{1}{h} \left( \omega(x + h v_1) - \omega(x) \right)$.
With \eqref{eq:integral} and \eqref{eq:exterior_diff} in mind, one sees why Stokes' theorem
\begin{equation}
  \int_{\cM} d \omega = \int_{\partial \cM} \omega.
  \label{eq:stokes}
\end{equation}
should hold intuitively. Given a map $\pi : \R^d \to \R^q$, the \emph{pullback} $\pi^\sharp \omega$ of the $k$-form $\omega$
is determined by
\begin{equation}
    \iprod{\pi^\sharp \omega}{v_1 \wedge .. \wedge v_k} = \iprod{\omega \circ \pi}{D_{v_1} \pi \wedge .. \wedge D_{v_k} \pi},
\end{equation}
where $D_{v_i} \pi = \nabla \pi \cdot v_i$ and $\nabla \pi \in \R^{q \times d}$ is the Jacobian. 

\subsection{Currents}
Denote the space of smooth $k$-forms with compact
support on $U$ as $\cD^k(U)$. \emph{Currents} are elements of the dual space
$\cD_k(U) = \cD^k(U)'$, i.e., linear functionals acting on differential forms.
As shown in Fig.~\ref{fig:a}, an oriented $k$-dimensional manifold $\cM \subset U$ induces
a current by 
\begin{equation}
  \llbracket \cM \rrbracket (\omega) = \int_{\cM} \omega.
\end{equation}
However, since $\cD_k(U)$ is a vector space, not all elements look like $k$-dimensional manifolds, see Fig.~\ref{fig:d}.
The \emph{boundary} of the $k$-current $T \in \cD_k(U)$ is the $(k - 1)$-current $\partial T \in \cD_{k-1}(U)$ defined via the exterior derivative:
\begin{equation}
  \partial T(\omega) = T(d \omega), \, \text{ for all } \, \omega \in \cD^{k-1}(U).
  \label{eq:bdry_cont}
\end{equation}
Stokes' theorem \eqref{eq:stokes} ensures that
for currents which are given by $k$-dimensional oriented manifolds, the boundary of the current agrees with the usual notion,
see Fig.~\ref{fig:b}.

The \emph{support} of a current, denoted by $\spt T$, is the complement of the biggest open set $V$ such that
\begin{equation}
  T(\omega)  = 0~ \text{ whenever }~ \spt(\omega) \subset V.
\end{equation}
  Given a map $\pi : \R^d \to \R^q$ the \emph{pushforward} $\pi_\sharp T$ of
  the $k$-current $T \in \cD_k(U)$ is given by
  \begin{equation}
    \pi_\sharp T(\omega) = T(\pi^\sharp \omega), \, \text{ for all }\, \omega \in \cD^k(\R^q).
  \end{equation}
  Intuitively, it transforms the current using the map $\pi$, as illustrated in Fig.~\ref{fig:a}.
  The \emph{mass} of a current $T \in \cD_k(U)$ is
  \begin{equation}
    \bM(T) = \sup \left \{ T(\omega) : \omega \in \cD^k(U), \norm{\omega(z)}^* \leq 1 \right\},
  \end{equation}
  and as expected $\bM(\llbracket \cM \rrbracket) = \cH^k(\cM)$.
We denote the space of $k$-currents with finite mass and compact support by $\M_k(U)$.
These are \emph{representable by integration}, meaning there is a
measure $\norm{T}$ on $U$ and a map $\vec{T} : U \to \LM_k \R^d$ such that $\norm{\vec{T}(z)} = 1$ for $\norm{T}$-almost all $z$ such that
\begin{equation}
  T(\omega) = \int \iprod{\omega(z)}{\vec{T}(z)} \, \mathrm{d} \norm{T}(z).
  \label{eq:polar}
\end{equation}
The decomposition \eqref{eq:polar} is crucial, and we will use it to define the relaxation
in the next section.

\subsection{The Relaxed Energy}
We lift the original energy \eqref{eq:pc} to the space of
finite mass currents $T \in \M_n(U)$ with $\spt T \subset X \times Y$ as follows:
\begin{equation}
  \lE(T) = \int \Psi^{**} \left( \pi_1 z, \pi_2 z, \vec{T}(z) \right) \, \mathrm{d} \norm{T}(z).
  \label{eq:lift_en}
\end{equation}
Since for $T = \llbracket \cG_f \rrbracket$ we have $\vec{T} = \tau_{\cG_f}$, $\norm{T} = \cH^n \measurerestr \cG_f$ the desirable property $\lE(\llbracket \cG_f \rrbracket) = E(f)$
holds due to Prop.~\ref{prop1}.

Note that in \eqref{eq:lift_en} we use the lower-semicontinuous regularization $\Psi^{**}$
which extends \eqref{eq:phi} at $v^{\bar 0,0} = 0$
with the correct value.
Interestingly, this point corresponds
to the situation when the graph has vertical parts, which cannot
occur for $C^1$ functions but can happen for general currents, see~Fig.~\ref{fig:c}.
In \cite{Mora} it was shown that one can penalize such jumps in a way depending on the jump distance and direction.
We will not consider such additional regularization due to space limitations, but remark that they could be integrated by adding
further constraints to the following dual representation, which is a consequence of \cite[Vol. II, Sec. 1.3.1, Thm.~2]{GMS-CC}.
\begin{prop}
\label{prop2}
For $T \in \M_n(U)$ with $\spt T \subset X \times Y$, we have the dual representation
\begin{equation}
  \lE(T) = \sup_{\omega \in \cK} ~ T(\omega),
  \label{eq:dual_en}
\end{equation}
where the constraint is the closed and convex set
\begin{equation}
  \begin{aligned}
    \cK = \Bigl\{ &\omega \in C_c^0(U, \LM^n \R^{n+N}) :  \\
     &\Psi^*(\pi_1 z, \pi_2 z, \omega(z)) \leq 0, \forall z \in X \times Y \Bigr\}.
  \end{aligned}
\end{equation}
\end{prop}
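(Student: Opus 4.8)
The plan is to derive \eqref{eq:dual_en} from a pointwise convex-duality identity for the integrand, which is then lifted to the integral against $\norm{T}$ via the polar decomposition \eqref{eq:polar}. The crucial structural fact is that $\Psi$ in \eqref{eq:phi}, being the perspective of a convex function, is positively one-homogeneous in its last argument, and this property is inherited by its lower-semicontinuous convex envelope $\Psi^{**}$. For a function that is convex, lsc and positively one-homogeneous in $v$, the conjugate in $v$ takes only the values $0$ and $+\infty$; since $(\Psi^{**})^* = \Psi^*$, this means $\Psi^*(\pi_1 z, \pi_2 z, \cdot)$ is the indicator of the closed convex set $\cC(z) = \{ w : \Psi^*(\pi_1 z, \pi_2 z, w) \le 0 \}$. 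Fenchel--Moreau biconjugation then gives, for every $z$ and $v$,
\[
  \Psi^{**}(\pi_1 z, \pi_2 z, v) = \sup \{ \iprod{w}{v} : \Psi^*(\pi_1 z, \pi_2 z, w) \le 0 \},
\]
the integrand-level analogue of the claim.

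The easy inequality $\sup_{\omega \in \cK} T(\omega) \le \lE(T)$ follows directly. Fixing any $\omega \in \cK$, the fact that $\spt T \subset X \times Y$ means $\norm{T}$ is concentrated there, so admissibility forces $\omega(z) \in \cC(z)$ for $\norm{T}$-a.e.\ $z$, whence $\iprod{\omega(z)}{\vec{T}(z)} \le \Psi^{**}(\pi_1 z, \pi_2 z, \vec{T}(z))$ by the pointwise identity. Integrating against $\norm{T}$ and using \eqref{eq:polar} yields $T(\omega) \le \lE(T)$, and taking the supremum over $\omega$ closes this direction.

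The reverse inequality is the substance of the statement and is exactly where the cited duality theorem \cite[Vol.~II, Sec.~1.3.1, Thm.~2]{GMS-CC} enters. First I would invoke a measurable-selection theorem to produce, for any $\varepsilon > 0$, a $\norm{T}$-measurable field $z \mapsto w(z) \in \cC(z)$ with $\iprod{w(z)}{\vec{T}(z)} \ge \Psi^{**}(\pi_1 z, \pi_2 z, \vec{T}(z)) - \varepsilon$; the pointwise identity above guarantees such near-maximizers exist. The hard part will be upgrading this merely measurable field to an admissible competitor in $\cK$, which must be \emph{continuous}, compactly supported, and satisfy $\Psi^* \le 0$ \emph{everywhere} on $X \times Y$. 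I would bridge this gap with Lusin's theorem together with a mollification and partition-of-unity argument, using that each $\cC(z)$ is convex and that the set-valued map $z \mapsto \cC(z)$ varies upper-semicontinuously---a consequence of the assumed continuity of $c$ in $(x,y)$---so that slightly perturbed values remain feasible. Particular care is required near the vertical-graph locus $v^{\bar 0, 0} = 0$; this is precisely why the lsc regularization $\Psi^{**}$ rather than $\Psi$ appears, since the limiting values of the perspective function must be matched by the test forms. Letting $\varepsilon \to 0$ then gives $\sup_{\omega \in \cK} T(\omega) \ge \lE(T)$, completing the argument.
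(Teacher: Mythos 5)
The paper gives no proof of Prop.~\ref{prop2} beyond the citation to \cite[Vol.~II, Sec.~1.3.1, Thm.~2]{GMS-CC}, and your outline is precisely the standard argument behind that theorem: since $\Psi^{**}(\pi_1 z,\pi_2 z,\cdot)$ is a proper, lsc, positively one-homogeneous convex function, it is the support function of the closed convex set $\cC(z)=\{w:\Psi^*(\pi_1 z,\pi_2 z,w)\le 0\}$; the inequality $\sup_{\omega\in\cK}T(\omega)\le\lE(T)$ then follows pointwise from the representation \eqref{eq:polar}; and the reverse inequality is exactly the commutation of supremum and integral over continuous selections that the cited theorem supplies. So you take essentially the same route, only with the scaffolding made explicit; the one imprecision is in the patching step, which needs \emph{inner} (lower) semicontinuity of $z\mapsto\cC(z)$ together with the usual trick of shrinking the near-maximizing selection toward a common feasible point (such as $0$) to preserve feasibility under mollification, rather than upper semicontinuity --- but that detail lives entirely inside the part you explicitly delegate to the citation.
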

The final relaxed optimization problem for \eqref{eq:pc} reads
\begin{equation}
  \begin{aligned}
    &\inf_{T \in \M_n(U)} ~ \lE(T), ~ \text{ s.t. } ~ \spt T \subset X \times Y, ~ T \in \cC.
  \end{aligned}
  \label{eq:pv}
\end{equation}
Depending on the kind of problem one wishes to solve, a different convex constraint set $\cC$ should be considered.
For example, in the case of variational problems with Dirichlet boundary conditions, we set
\begin{equation}
  \begin{aligned}
    \cC = \bigl \{ T ~:~ \pi_{1\sharp} T = \llbracket X \rrbracket, ~ \partial T = S \bigr \},
  \end{aligned}
\end{equation}
where $S \in \M_{n-1}(U)$ is a given boundary datum. In case of Neumann boundary conditions,
one constrains the support of the boundary to be zero inside the domain
\begin{equation}
  \begin{aligned}
    \cC = \bigl \{ T ~:~ \pi_{1\sharp} T = \llbracket X \rrbracket, \, \spt \partial T \subset (\partial X) \times Y \bigr \},
  \end{aligned}
\end{equation}
to exclude surfaces with holes, but allow the boundary to be freely chosen on $(\partial X) \times Y$.
In case $n = N$, one can also consider the constraint set
\begin{equation}
  \begin{aligned}
    \cC = \bigl \{ T ~:~ &\pi_{1\sharp} T = \llbracket X \rrbracket, \pi_{2\sharp} T = \llbracket Y \rrbracket, \\
    &\spt \partial T \subset \partial (X \times Y) \bigr \},
  \end{aligned}
  \label{eq:bij}
\end{equation}
where the additional pushforward constraint encourages bijectivity.
Notice also the similarity of \eqref{eq:pv} together with \eqref{eq:bij} to the Kantorovich relaxation in optimal transport.

Existence of minimizing currents to a similar problem as \eqref{eq:pv} in a certain space of currents (\emph{real flat chains}) is shown in \cite[\textsection3.9]{federer1974real}.
For dimension $n = 1$ or codimension $N = 1$, the
infimum is actually realized by a surface (\emph{integral flat chain}) \cite[\textsection5.10, \textsection5.12]{federer1974real}.
An adaptation of such theoretical considerations to our setting and conditions under which the relaxation is tight in the scenario $n > 1$, $N > 1$ is a major open challenge and left for future work.

\section{Discrete Formulation}
\label{sec:implementation}
In this section we present an implementation of the continuous model
\eqref{eq:pv} using discrete exterior calculus~\cite{Hirani2003}. 
We will base our discretization on cubes since they are easy to work with in high dimensions,
but one could also use simplices. To define cubical meshes, we adopt some notations from computational homology~\cite{CH}.
\begin{defi}[Elementary interval and cube]
  An \textbf{elementary interval} is an interval $I \subset \R$ of the form
  $I = [l, l + 1]$ or $I = \{ l \}$ for $l \in \Z$. Intervals that consist of a single point are \textbf{degenerate}.
  An \textbf{elementary cube} is given by a product 
  $\kappa = I_1 \times \hdots \times I_d$,
  where each $I_i$ is an elementary interval. The set of elementary cubes in $\R^d$ is denoted by $K^d$.
\end{defi}
For $\kappa \in K^d$, denote by $\dim \kappa \in \{ 1, \hdots, d \}$ the number of nondegenerate intervals. We  denote
$\bi(\kappa) \in I(d, \dim \kappa)$ as the multi-index referencing the nondegenerate intervals.
\begin{defi}[Cubical set]
A set $Q \subset \R^d$ is a \textbf{cubical set} if it can be written as a finite union of elementary cubes. 
\end{defi}
Let $K_k^d(Q) = \{ \kappa \in K^d : \kappa \subset Q, \dim \kappa = k \}$
be the set of $k$-dimensional cubes contained in $Q \subset \R^d$.
A map $\phi : Q \to X \times Y$ will transform the cubical set to our domain.
As we work with images, it will just be a mesh spacing, i.e., we set $\phi(z) = (h_1 z_1, \hdots, h_d z_d)$.

\begin{defi}[$k$-chains, $k$-cochains]
  We denote the space of finite formal sums of elements in $K_k^d(Q)$ with real coefficients as $\cC_k(Q)$, called (real) \textbf{$k$-chains}. We denote the dual as $\cC_k(Q)^* = \cC^k(Q)$ and call the elements \textbf{$k$-cochains}. 
\end{defi}

\begin{figure}[t]
  \centering
  \input{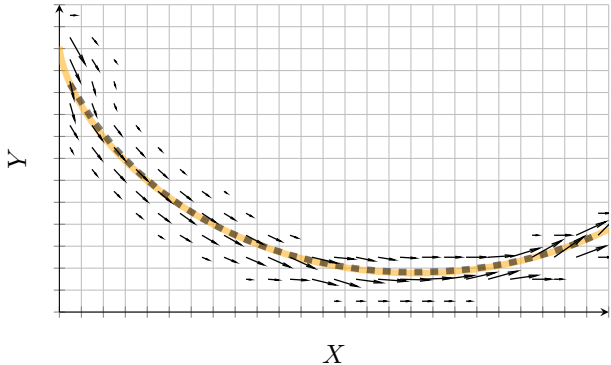}
  \caption{Minimization of the \emph{Brachistochrone} energy on a $25 \times 14$ cubical set (gray squares). The proposed discretization yields a diffuse current (black vector field), whose center of mass (black, dashed) however is faithful to the analytical cycloid solution (orange) far beyond the mesh accuracy.}
  \label{fig:illu}
\end{figure}

\begin{defi}[Boundary]
  For $\kappa \in K_k^d(Q)$, denote the primary faces obtained by collapsing the $j$-th 
  non-degenerate interval to the lower respectively upper boundary as
  $\kappa_j^-, \kappa_j^+ \in K_{k-1}^d$.
  The \textbf{boundary} of an elementary cube $\kappa \in K_k^d(Q)$ is the $(k-1)$-chain, 
  \begin{equation}
    \partial \kappa = \sum_{j=1}^k (-1)^{j-1} (\kappa_j^+ - \kappa_j^-) \in \cC_{k-1}(Q).
  \end{equation}
  The \textbf{boundary operator} is given by the extension to a linear map $\partial : \cC_{k}(Q) \to \cC_{k-1}(Q)$. 
\end{defi}
A $k$-chain $T = \sum_\kappa T_\kappa \kappa \in \cC_k(Q)$ can be identified with a $k$-current $T' \in \cD_k(U)$ by $T' = \sum_{\kappa} T_\kappa \phi_\sharp \llbracket \kappa \rrbracket$. 
The above discrete notion of boundary is defined in analogy to the continuous definition \eqref{eq:bdry_cont}.

In our discretization, we will use the dual representation of the lifted energy from Prop.~\ref{prop2}. To implement
differential forms, we introduce an interpolation operator.
\begin{defi}[Whitney map]
  The \textbf{Whitney map} extends a $k$-cochain $\omega$ to a $k$-form $(\Wh \omega) : X \times Y \to \LM^k \R^d$,
  \begin{equation}
    (\Wh \omega)(x) = \sum_{\kappa \in K^d_k(Q)} \omega_\kappa \widehat \Wh(\phi^{-1}(x), \kappa),
    \label{eq:whitney_form}
  \end{equation}
  where $\omega_\kappa \in \R$ are the coefficients of the $k$-cochain,
  \begin{equation}
    \widehat \Wh(x, \kappa) = \dd x_{\bi(\kappa)} \prod_{i \in \overline{\bi(\kappa)}} \max \{ 0, 1 - |x_i - I_i(\kappa)| \},
  \end{equation}
  and $I_i(\kappa) \in \Z$ is the element in the degenerate interval. 
\end{defi}
Interestingly, the Whitney map (for simplicial meshes) first appeared in \cite[Eq.~27.12]{whitney1957geometric} but specializes to 
lowest-order Raviart-Thomas~\cite{raviart1977mixed} ($k=2$,$d=3$) and
N{\'e}d{\'e}lec~\cite{nedelec1980mixed} elements (for $k=1$, $d=3$), see~\cite{arnold2014,arnold2006finite}.
Differential forms of type \eqref{eq:whitney_form} are called Whitney forms.

We also define a weighted inner product $\iprod{\cdot}{\cdot}_\phi$ between chains and cochains 
by plugging the Whitney form associated to the $k$-cochain into the current corresponding to the 
$k$-chain. As both are constant on each $k$-cube, a quick calculation shows:
$\iprod{T}{\omega}_\phi = \sum_{\kappa} T_\kappa \omega_\kappa \cH^k(\phi(\kappa))$, where
$\cH^k(\phi(\kappa))$ is simply the volume of the $k$-cube under the mesh spacing $\phi$.

\begin{figure}[t]
  \centering
  \setlength\tabcolsep{0.05cm}
  \begin{tabular}{ccccc}
    \includegraphics[width=0.19\linewidth]{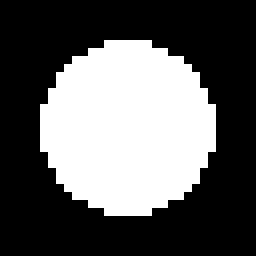}&
    \includegraphics[width=0.19\linewidth]{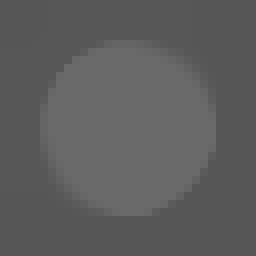}&
    \includegraphics[width=0.19\linewidth]{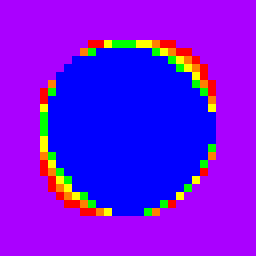} & 
    \includegraphics[width=0.19\linewidth]{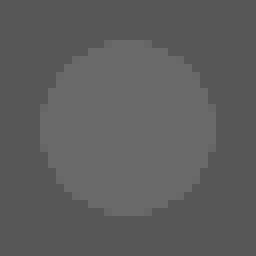} &
    \includegraphics[width=0.19\linewidth]{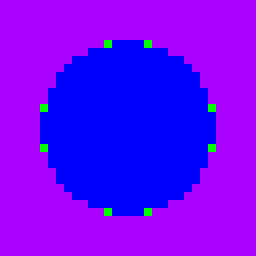} \\[-1mm]
    \small Input & \multicolumn{2}{c}{\small Finite differences} & \multicolumn{2}{c}{\small Discrete exterior calculus}
  \end{tabular}
  \\[2mm]
  \centering
  \setlength\tabcolsep{0.05cm}
  \begin{tabular}{cc}
    \includegraphics[width=0.495\linewidth]{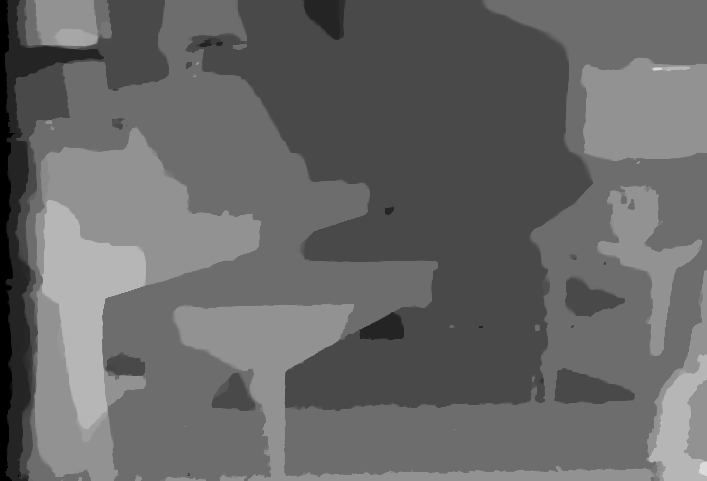}&
    \includegraphics[width=0.495\linewidth]{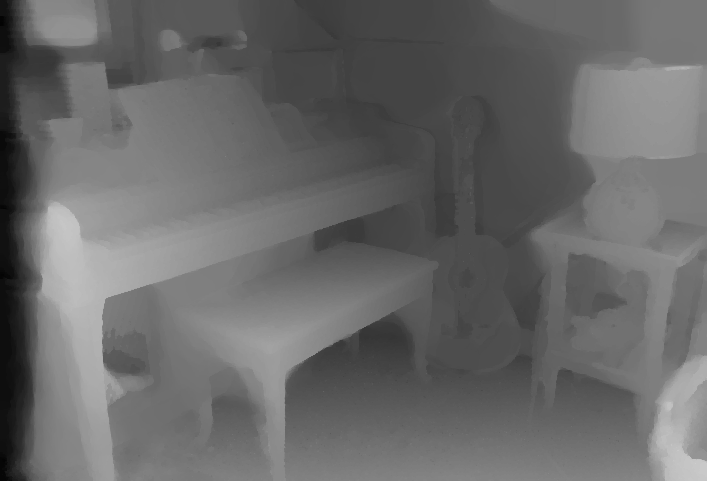}\\[-1mm]
    \small Finite differences 
    &\small Discrete exterior calculus
  \end{tabular}
  \caption{Total variation minimization. \textbf{Top:} The proposed DEC discretization yields solutions with better isotropy and sharper discontinuities. \textbf{Bottom:} In that stereo matching example, we enforce the continuous constraints 
  $\Wh \omega \in \cK$ between the discretization points (here $8$ \emph{labels}), which leads to more precise (sublabel-accurate) solutions compared to the naive finite-difference approach.}
  \label{fig:circle}
\end{figure}

Using the dual representation \eqref{eq:dual_en}, and approximating 
the current by a $k$-chain and the differential forms with $k$-cochains
we arrive at the following finite-dimensional convex-concave saddle-point problem on
$Q \subset \R^{n + N}$:
\begin{equation}
  \begin{aligned}
    &\min_{T \in \cC_n(Q)} \max_{\substack{\omega \in \cC^n(Q)\\ \varphi \in \cC^{n-1}(Q)}} \, \iprod{T}{\omega}_\phi + \iprod{\partial T - S}{\varphi}_\phi, \\
    &\quad \text{subject to } \quad \pi_{1\sharp} T = \mathbf{1}, \Wh \omega \in \cK, \\
    &\quad \text{potentially } \hspace{0.19cm}  \pi_{2\sharp} T = \mathbf{1} \text{ in case } n = N.  
  \end{aligned}
  \label{eq:saddle}
\end{equation}
$S \in \cC_{n-1}(Q)$ is a given boundary datum, for free boundary conditions we replace the inner product $\iprod{S}{\varphi}$ with an indicator function $S : \cC^{n-1} \to \Rext$ forcing $\varphi$ to be zero on the boundary.
The pushforwards $\pi_{1\sharp}$, $\pi_{2\sharp}$ are linear constraints on the coefficients of the $k$-chain $T$.

\begin{figure}[t!]
  \centering
  \setlength\tabcolsep{0.05cm}
  \begin{tabular}{cccc}
    \includegraphics[width=0.24\linewidth]{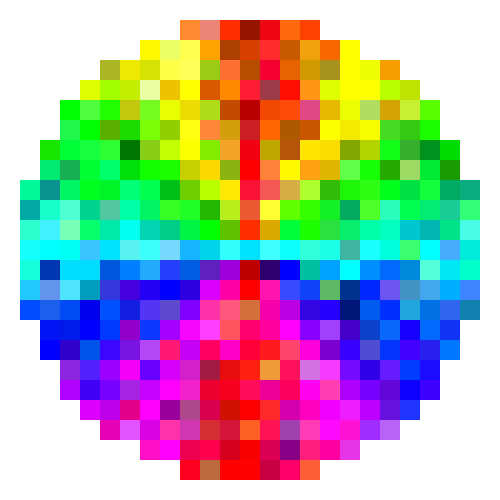}
    &\includegraphics[width=0.24\linewidth]{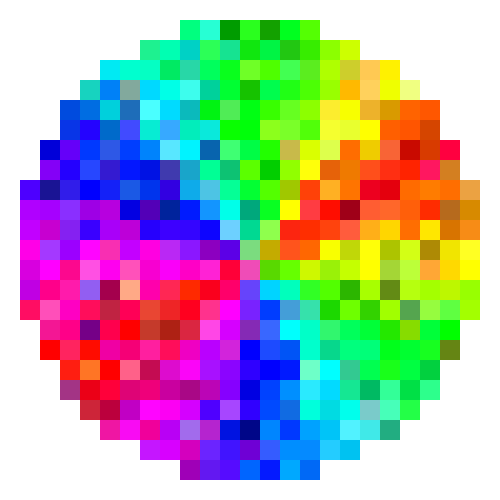}
    &\includegraphics[width=0.24\linewidth]{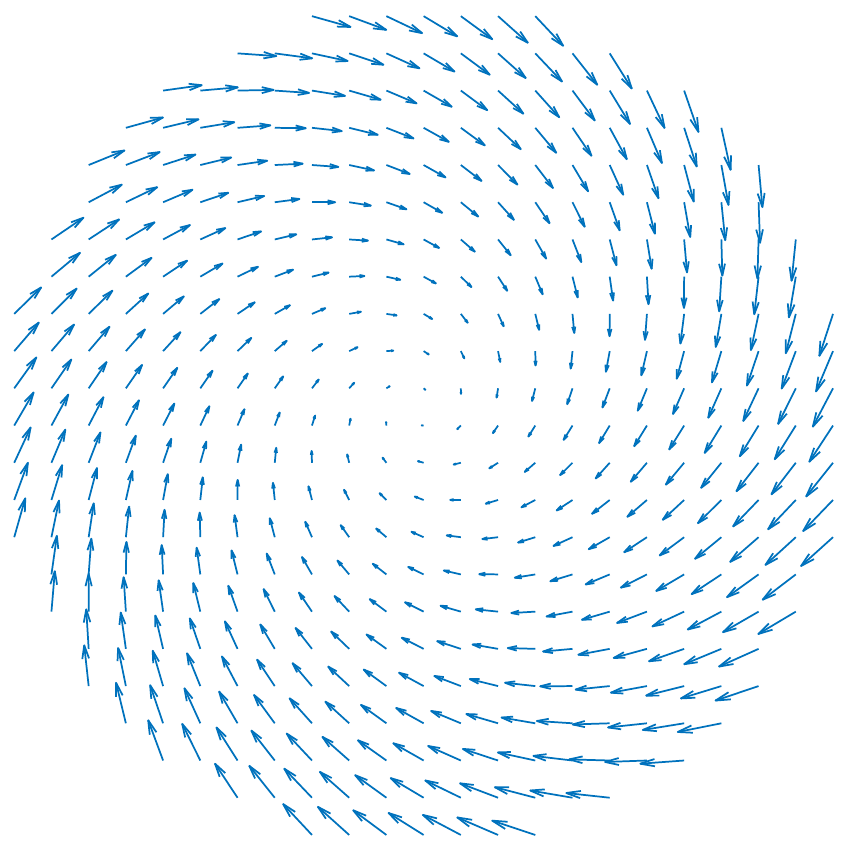}
    &\includegraphics[width=0.24\linewidth]{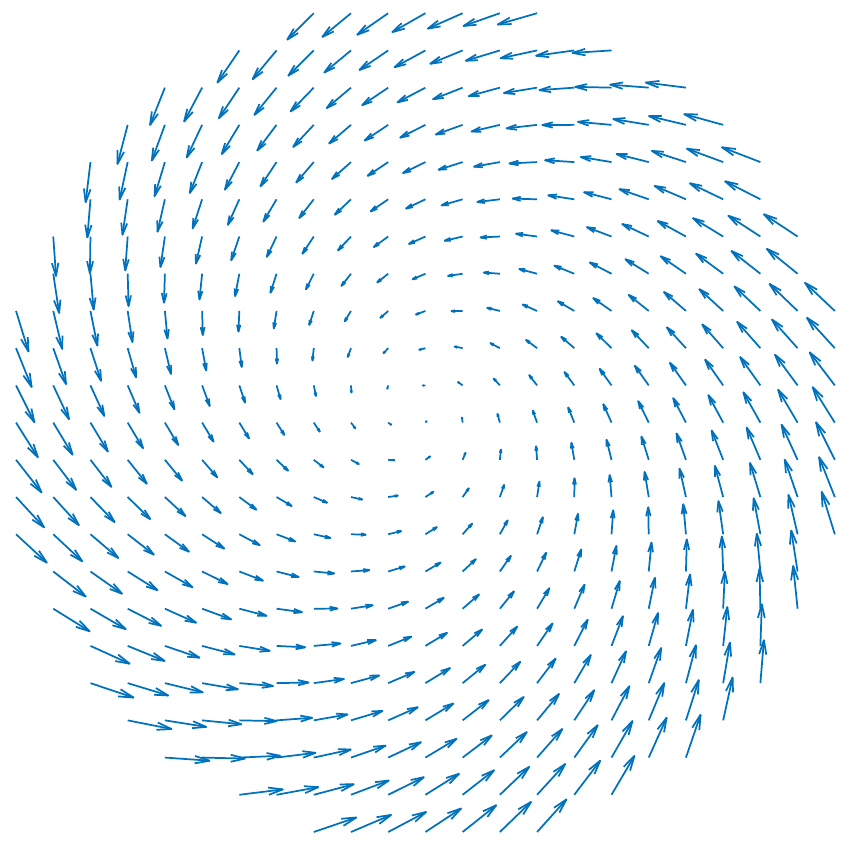} \\
    $X$ 
    &$Y$ 
    &$\id - f$
    &$\id - f^{-1}$\\[2mm]
    \includegraphics[width=0.24\linewidth]{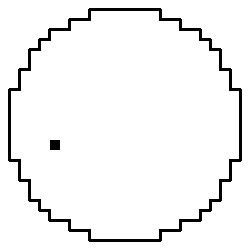}
    &\includegraphics[width=0.24\linewidth]{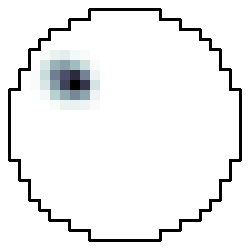}
    &\includegraphics[width=0.24\linewidth]{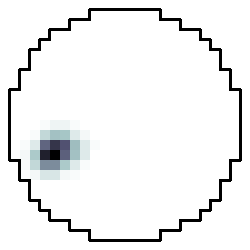} 
    &\includegraphics[width=0.24\linewidth]{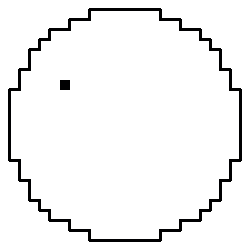}\\
    \multicolumn{2}{c}{left-to-right slice of $\norm{T}$} 
    &\multicolumn{2}{c}{right-to-left slice of $\norm{T}$} \\
  \end{tabular}
  \caption{Global registration of $X$ and $Y$. \textbf{Top:} Our method yields dense pointwise correspondences that are smooth in both directions and correspond to the correct transformation. \textbf{Bottom:} $2$-D slices 
 through the $4$-D density $\norm{T}$ at the single black pixel. We empirically verify (also at the other points) that the current concentrated near a surface, therefore the recovered solution is near the \emph{global minimum} of the original nonconvex problem.}
  \label{fig:corr}
\end{figure}

\section{Numerical Implementation}
\label{sec:numerics}
In practice we solve \eqref{eq:saddle} with the first order primal-dual algorithm \cite{Chambolle-Pock-jmiv11}. 
For the local constraints $\Wh \omega \in \cK$ usually no closed form projection exists.
In some situations ($N=1$) they can be implemented exactly, see~\cite{moellenhoff-laude-cvpr-2016, moellenhoff-iccv-2017}.
In the general setting, we resort to implementing them at 
sampled points.  To enforce the constraint $\Wh \omega \in \cK$ at
samples $Z = \{ z_1, z_2, \hdots \} \subset X \times Y$ we add another primal
variable $\lambda : Z \to \LM_n \R^{n+N}$ and the additional term 
$\sum_{z \in Z} \Psi^{**}(z, \lambda(z)) - \iprod{\lambda(z)}{(\Wh \omega)(z)}$
to the saddle-point formulation \eqref{eq:saddle}. 

Finally, one requires the proximal operator of the perspective function $\Psi^{**}$. 
These can be implemented using epigraphical projections as in \cite{PCBC-SIIMS}.
For an overview over proximal operators of perspective functions we refer to~\cite{combettes2018perspective}.

\subsection{Properties of the Discretization}
As a first example we solve 
the Brachistochrone~\cite{bernoulli}, arguably the first variational approach.
The cost is given by $c(x, y, \xi) = \sqrt{\frac{1 + \xi^2}{2 g y}}$ where $g>0$ is the gravitational constant.
Dirichlet boundary conditions are enforced using the boundary operator.
In Fig.~\ref{fig:illu} we show the resulting current, which concentrates on the graph of 
the closed-form solution to the problem, which is a cycloid. The unlifted result is obtained by taking the 
center of mass of the first component $T^{\bar 0,0}$ of the current by summing over the horizontal edges in the $1$-chain. 
The obtained result nearly coincides with the exact cycloid. Instead, solutions from
MRF approaches would invariably be confined to the vertices or edges of the rather coarse grid.

In Fig.~\ref{fig:circle} we solve total variation regularized problems which corresponds 
to setting $c(x, y, \xi) = \rho(x, y) + \normc{\xi}$ for some data $\rho$. The data 
is either a quadratic or a stereo matching cost in that example. The proposed approach based 
on discrete exterior calculus has better isotropy and leads to sharper discontinuities than the common forward difference approach
used in literature. Furthermore, by enforcing the constraints $\Wh \omega \in \cK$ also between the discretization points 
one can achieve ``sublabel-accurate'' results as demonstrated in the stereo matching example.

\subsection{Global Registration}
As an example of $n > 1$, $N > 1$ with polyconvex regularization, 
we tackle the problem of orientation preserving diffeomorphic registration between two shapes $X, Y \subset \R^2$ with boundary.
We use the cost $c(x, y, \xi) = \left( \rho(x, y) + \varepsilon \right) \sqrt{\det \left( I + \xi^\top \xi \right)}$, 
which penalizes the surface area in the product space and favors local isometry. The parameter $\varepsilon > 0$ 
models the trade-off between data and smoothness. In the example considered in Fig.~\ref{fig:corr} 
the data is given by $\rho(x, y) = \norm{I_1(x) - I_2(y)}$, where $I_1, I_2$ are the shown color images.
A polyconvex extension of the above cost, which is large for non-simple vectors is given by the (weighted) mass norm \eqref{eq:mass}. 
The $4$-D cubical set $Q$ is the product space between the two shapes $X$ and $Y$, which are given as quads (pixels).
We impose the constraints $\Wh \omega \in \cK$ at the $16$ vertices of each four dimensional hypercube. 
The proximal operator of the mass norm is computed as in \cite{windheuser2016convex}. Note that the required
$4 \times 4$ real Schur decomposition can be reduced to a $2 \times 2$ SVD using a few Givens rotations, see \cite{SkewSVD}. 
We further impose $T^{\, \bar0,0} \geq 0$ and $T^{\, 0,\bar0} \geq 0$, and boundary conditions ensure that $\partial X$ is matched to
$\partial Y$. Bijectivity of the matching is encouraged by the pushforward constraints $\pi_{1\sharp} T = \mathbf{1}$, $\pi_{2\sharp} T = \mathbf{1}$.
After solving \eqref{eq:saddle} we obtain the final pointwise correspondences $f : X \to Y$ from the $2$-chain $T \in \cC_2(Q)$ by taking its center of mass.

In Fig.~\ref{fig:corr} we visualize $f(x) = \sum_y y \, \normc{(\Wh T)(x,y)}$, $f^{-1}(y) = \sum_x x \, \normc{(\Wh T)(x,y)}$.
As one can see, the maps $f$ and $f^{-1}$ are smooth and inverse to each other. Despite $n > 1$, $N > 1$, the current apparently concentrated near a surface and
the computed solutions are therefore near the \emph{global optimum} of the original nonconvex problem.

\section{Discussion and Limitations}
\label{sec:discussion}
In this work, we introduced a novel approach to vectorial variational problems 
based on geometric measure theory, along
with a natural discretization using concepts from discrete exterior calculus. 
Though observed in practice, we do not have theoretical guarantees that the minimizing current will concentrate on a surface.
In case of multiple global solutions, one might get a convex combination of minimizers. Some mechanism to select an extreme point of the convex solution set would therefore be desirable.
The main drawback over MRFs, for which very efficient solvers exist~\cite{kappes2013comparative}, is that we had to resort to the generic 
algorithm~\cite{Chambolle-Pock-jmiv11} with $\cO(1/\varepsilon)$ convergence. 
Yet, solutions with high numerical accuracy are typically not required in practice and the algorithm parallelizes well on GPUs.
To conclude, we believe that the present work is a step towards making continuous approaches an attractive alternative 
to MRFs, especially in scenarios where faithfulness to certain geometric properties of 
the underlying continuous model is desirable.

\paragraph{Acknowledgements.} The work was partially supported by
the German Research Foundation (DFG); project 394737018 ``\emph{Functional Lifting 2.0 -- Efficient Convexifications for Imaging and Vision}''.

{\small
\bibliographystyle{ieee}
\bibliography{../../reference.bib}
}

\end{document}